\documentclass[10pt, conference, compsocconf]{IEEEtran}
\ifCLASSINFOpdf
\else
\fi

\usepackage{xcolor}

\usepackage{algorithm}
\usepackage{algorithmic}
\usepackage{amsfonts}
\usepackage[bookmarks=false]{hyperref}
\usepackage{url}
\usepackage{amsmath}
\usepackage{amsthm}
\usepackage{graphicx}
\usepackage{subcaption}
\newtheorem{theorem}{Theorem}

\newcommand{\zap}[1]{}
\newcommand{\acronym}{DTD-DP} 
\newcommand{\MUCB}{DTD-UCB} 
\newcommand{\name}{DTD-Dynamic-Programming} 
\newcommand{\NMUCB}{DTD-Upper-Confidence-Bound} 

\newlength{\bibitemsep}\setlength{\bibitemsep}{0.3\baselineskip plus 0.05\baselineskip minus .05\baselineskip}
\newlength{\bibparskip}\setlength{\bibparskip}{0pt}
\let\oldthebibliography\thebibliography
\renewcommand\thebibliography[1]{%
  \oldthebibliography{#1}%
  \setlength{\parskip}{\bibitemsep}%
  \setlength{\itemsep}{\bibparskip}%
}

\hyphenation{op-tical net-works semi-conduc-tor}

\begin{document}
%
\title{The Bayesian Linear Information Filtering Problem}


\author{\IEEEauthorblockN{Bangrui Chen}
\IEEEauthorblockA{School of Operations Research \& Information Engineering\\
Cornell University\\
Ithaca, NY, 14853, USA\\
bc496@cornell.edu}
\and
\IEEEauthorblockN{Peter I. Frazier}
\IEEEauthorblockA{School of Operations Research \& Information Engineering\\
Cornell University\\
Ithaca, NY, 14853, USA\\
pf98@cornell.edu}
}


%


\maketitle

\begin{abstract}
We present a Bayesian sequential decision-making formulation of the information filtering problem, in which an algorithm presents items (news articles, scientific papers, tweets) arriving in a stream, and learns relevance from user feedback on presented items. We model user preferences using a Bayesian linear model, similar in spirit to a Bayesian linear bandit. We compute a computational upper bound on the value of the optimal policy, which allows computing an optimality gap for implementable policies.
We then use this analysis as motivation in introducing a pair of new Decompose-Then-Decide (DTD) heuristic policies, \name\ (\acronym) and \NMUCB\ (\MUCB). 
We compare \acronym\ and \MUCB\ against several benchmarks on real and simulated data, demonstrating significant improvement, and show that the achieved performance is close to the upper bound.

\end{abstract}

\begin{IEEEkeywords}
exploration vs. exploitation, information filtering, Bayesian statistics, dynamic programming, linear bandit
\end{IEEEkeywords}

%
\IEEEpeerreviewmaketitle

\section{INTRODUCTION}

Information filtering systems automatically distinguish relevant from irrelevant items (emails, news articles, intelligence information) in large information streams \cite{pid}. They typically use a classifier trained on relevance feedback from past items. However, when filtering for new users, or when item contents or user interests change, sufficient training data may not be available. In such ``cold-start'' situations, it may be beneficial to actively explore user interests by forwarding those items whose relevance we wish to learn, but too much exploration degrades short-term performance. This is an example of the so-called exploration vs. exploitation tradeoff \cite{suttonbarto}.

In this paper, we present a Bayesian sequential decision-making formulation of this problem, where user interests are described by a Bayesian linear model, similar in spirit to a Bayesian linear bandit \cite{tscblp}. The first contribution of our paper is to construct an instance-specific computational upper bound on the value of a Bayes-optimal strategy, which may be used to bound the optimality gap for implementable heuristic policies. 
Our upper bound is most naturally applied to items whose features are weights from a topic model \cite{tmitm} or other mixture model, but can also be applied to other linear models.
Our second contribution is to use the idea of decomposing the problem into a collection of forwarding problems with one-dimensional feature ``vectors", developed in the construction of the upper bound, to create a pair of heuristic policies, jointly given the name {\it Decompose-Then-Decide (DTD)}.  The first heuristic, called \name\   (\acronym), solves each one-dimensional forwarding problem using stochastic dynamic programming, while the second, called \NMUCB\  (\MUCB), uses the upper confidence bound policy with a learning parameter that is adjusted based on the distribution of feature vectors in the given direction.
Finally, we evaluate our upper bound and proposed policies on real and simulated data, and find that our upper bound is typically tight, and that \MUCB\ outperforms a number of benchmarks, including UCB and Linear Thompson Sampling, in all problem instances. 

The traditional approach to adaptive information filtering trains on historical feedback and does not actively explore to get the most useful feedback. However, there has been some work on active exploration in information filtering.
\cite{eeaf} studies a Bayesian decision-theoretic version of this problem in which a univariate score is observed for each item, and relevance is related to this score via logistic regression. The system does active exploration by valuing the information that results from forwarding, via a one-step lookahead calculation.  The multi-step Bayes-optimal policy is not calculated or characterized.
\cite{eeifp} studies another Bayesian decision-theoretic version of this problem in which items are described by a hard clustering scheme, and users have independent heterogeneous preferences for item clusters.  A computational procedure for calculating the (multi-step) Bayes-optimal policy is provided.  However, the learning scheme used does not allow learning user interest in one category from interactions with other related categories, making it difficult to scale to fine-grained item representations.

A much larger literature on active exploration may be found in work on the multi-armed bandit problem \cite{sasde}. Indeed, the information filtering problem we study can be seen as a special case of the (Bayesian) contextual linear multi-armed bandit problem [3, 8, 9, 10]. The context is the feature vector for the arriving paper, and two arms are available: pulling the first arm corresponds to forwarding the paper, and provides a reward corresponding to the paper's relevance, minus some cost for the user's time; pulling the second arm corresponds to discarding the paper, and has known value $0$. 



While much of the work on multi-armed bandits, including work specifically on linear and contextual bandits, has focused on asymptotic regret guarantees when latent parameters (in our case, the vector of user preferences for features) are chosen by an adversary,
we focus on the Bayesian setting, where we assume that latent parameters are drawn from a prior probability distribution.

Our assumption of a Bayesian framework has advantages and disadvantages.
The main advantage is that it supports good performance when the amount of feedback received is small (of great importance in the cold-start setting).  In contrast, algorithms designed to have regret with an optimal rate in the linear bandit setting, such as the PEGE algorithm in \cite{lpb},  typically need a number of interactions at least as large as the dimension of the feature vector, which may be hundreds of dimensions or more.
A Bayesian algorithm can do well much sooner than this, by using information embedded in the prior that, for example, most users have little preference for a particular feature, or that users who prefer one feature tend to not prefer another feature.

The main disadvantage of the Bayesian framework is that choosing a reasonable prior typically requires work and assumptions. 
However, in the specific application context that we study, personalized information filtering, there is a natural way to build a prior from historical interaction data with other users.  We explain and illustrate this method in Section~\ref{sec:realdata1} using the Yelp academic dataset \cite{yelp} and Section~\ref{sec:realdata2} using the arXiv \cite{arxiv} condensed matter dataset.


Our upper bound is an instance-specific computational upper bound on the performance of the optimal policy. It can be used to compute how far \acronym, \MUCB, or any other policy is from optimal for any given problem instance by computing the value of the heuristic with simulation, computing the upper bound, and subtracting the value from the bound.  In industry, where one must allocate engineering and data science effort across projects, and one typically has a collection of concrete problems with business impact, this supports deciding whether the improvements that will be seen from continued algorithmic development are worthwhile, or whether the best existing heuristic is good enough.  While our upper bound does not determine whether a proposed algorithm attains the optimal asymptotic rate, nor does it allow computing worst-case bounds over all problem instances, we argue that knowing distance from the optimal finite-time performance for specific problem instances with business impact is often more useful.

This paper is structured as follows. In Section~\ref{sec:prob}, we formulate the Bayesian information filtering problem. In Section~\ref{sec:theory}, we develop a computationally tractable upper bound on the value of an optimal policy (Section~\ref{sec:bound}), use this analysis to motivate development of \acronym\ (Section~\ref{sec:policy1}) and \MUCB\ (Section~\ref{sec:policy2}). In Section~\ref{sec:numerical} we compare \acronym\ and \MUCB's performance against benchmarks on both real and simulated data, show a significant improvement over the best of these benchmarks, tuned UCB, and show that its performance is close to the computational upper bound across a range of problems.

\section{Problem formulation}
\label{sec:prob}

We consider information filtering for a single user. Items arrive to the information filtering system following a Poisson distribution with rate $\Gamma$. The \textrm{$n^{\mathrm{th}}$} arriving item is described by a k-dimensional feature vector $X_{n}=(x_{1,n},\cdots,x_{k,n})$. 
We assume that $x_{i,n}\geq 0$ for all $i$ and $n$ (If $x_{i,n}$ are bounded below, then this is without loss of generality). The vector $X_{n}$ is observable to the system when the item becomes available for forwarding, and we assume the system also knows the distribution of $X_{n}$. This distribution can typically be estimated from historical data. In this paper, we denote the density function of the feature vectors' distribution as $f(X_n)$.

Let $\theta=(\theta_{1},\cdots,\theta_{k})$ denote the single user's latent preference vector for the $k$ different features. Here we model $\theta$ as having been drawn from a multivariate normal distribution with mean $\mu_{0}=(\mu_{1,0},\cdots,\mu_{k,0})$ and covariance matrix $\Sigma_{0}$, which represents our Bayesian prior distribution about the latent preference vector. Usually this initial belief can be obtained using the historical data from other users and we give examples of how this may be accomplished in Section~\ref{sec:realdata1} and Section~\ref{sec:realdata2}.  Further, we use $\mu_{n}$ and $\Sigma_{n}$ to denote our Bayesian posterior distribution about the user's reward vector after the arrival of the first n items.

Upon each item's arrival, the system decides whether to forward this item to the user or not. We let $U_{n}\in \{0,1\}$ represent this decision for the \textrm{$n^{\mathrm{th}}$} item, where 1 means to forward and 0 means not to forward. If the system decides not to forward, then the item is discarded. Each time the system forwards, it pays a constant cost $c$ and receives the item's relevance $Y_n$ as a reward.  This relevance is modeled as the inner product between the user's unobservable vector of preferences for features $\theta$ and the item's feature vector $X_n$, perturbed by independent normal noise $\epsilon_n$ with variance $I(X_{n})\lambda^2$, where $I(X_{n})$ denotes the number of non-zero elements in $X_{n}$.  The system only observes $Y_{n}$ if it forwards the item. Except for the fact that some $Y_n$ are unobserved, this statistical model is Bayesian linear regression (see \cite{bda}, Chapter 14).

In many applications, $I(X_n)=k$ with probability 1, making our assumed observational variance of $I(X_n)\lambda^2$ equivalent to assuming homogeneous variance $k \lambda^2$.  Even when $I(X_n)$ varies, we may modify our problem by perturbing each component of $X_n$ by some arbitrarily small $\epsilon>0$ to make $I(X_n)=k$ without substantially affecting the value of any particular policy.

The decision of whether or not to forward the \textrm{$n^{\mathrm{th}}$} item can only depend on the previous information $H_{n-1}=(U_{m},X_{m},U_{m}Y_{m}:m\leq n-1)$ as well as our current $X_{n}$. A policy $\pi$ is a sequence of functions $\pi=(\pi_{1},\pi_{2},\cdots)$ such that $\pi_{n}=(\mathbb{R}_{+}^{k}\times \{0,1\})^{n-1}\times \mathbb{R}_{+}^{k}\mapsto \{0,1\}$ and we use $\Pi$ to denote the set of all such policies.

Suppose that the (random) lifetime of the user in the system is $T$, and let $N$ be the total number of items that arrive to the system before $T$.  Then our goal is to maximize:

\begin{equation}
\sup_{\pi\in\Pi}E^{\pi}\left[\sum_{n=1}^{N}U_{n}(Y_{n}-c)\right] \label{prob}
\end{equation}
where $E^{\pi}$ denotes the expected reward using policy $\pi$. 

For analytic tractability, we assume that $T$ is exponentially distributed, and let its rate parameter be $r>0$. Then, one can show that $N$ follows a geometric distribution with parameter $\gamma=\frac{\Gamma}{\Gamma+r}$, and the random finite horizon problem \eqref{prob} can be transformed to a discounted infinite horizon problem: 
\begin{equation}
E^{\pi}\left[\sum_{n=1}^{N}U_{n}(Y_{n}-c)\right]=\gamma E^{\pi}\left[\sum_{n=1}^{\infty} \gamma^{n-1} U_{n} (Y_{n}-c)\right],\label{Prob}
\end{equation}

where $\gamma=\frac{\Gamma}{\Gamma+r}$. The proof is the same as Lemma 1 in \cite{eeifp} and we omit the proof here.





\section{Main Results}
\label{sec:theory}
The problem described in section~\ref{sec:prob} is a partially observable Markov decision process, and can, in theory, be solved using stochastic dynamic programming, see \cite{ampomdp} and \cite{pomdp}.  However, the state space of this dynamic program on the belief state is in high dimension ($k$ dimensions are required to represent the posterior mean, and $O(k^2)$ dimensions are required for the posterior covariance matrix), which makes solving it computationally intractable. 

Instead, we provide in this section a computational upper bound of this problem (in Section~\ref{sec:bound}) and develop two implementable policies \acronym\ and \MUCB\ based on this upper bound in Section~\ref{sec:policy1} and Section~\ref{sec:policy2}. When \acronym\ and \MUCB, or any other implementable policy, gives us a result close to the upper bound, then we are reassured that this policy is nearly optimal.

In practice, \acronym\ and \MUCB\ tend to perform best when feature vectors are approximately aligned with a basis. This may tend to occur most frequently in high dimensional problems, where vectors tend to be orthogonal.

\subsection{Upper bound}\label{sec:bound}
In this section, we provide a computational upper bound on the value of the solution to \eqref{prob}. This upper bound is based on the idea of dividing \eqref{prob} into $k$ different ``single-feature'' subproblems, then performing an information relaxation (similar in spirit to \cite{irdsdp}) in which we give the policy assigned to each single-feature subproblem additional information, which allows us to compute their value efficiently.

Define  $Y_{i,n}=\theta_{i}+\epsilon_n^i$. Here $\epsilon_{n}^{i}\sim N(0,\frac{\lambda^{2}}{x_{i,n}^{2}})$ if $x_{i,n}>0$ and $\epsilon_{n}^{i}=0$ if $x_{i,n}=0$ for $i=1,\cdots,k$, independently distributed across i and n. We may think of $Y_{i,n}$ as the reward that we would have seen if $X_{n}$ were equal to $e_{i}$, where $e_{i}$ is a unit vector with the $i_{\mathrm{th}}$ element 1 and other elements 0. Later, we will use that
$Y_{n}=\sum_{i=1}^{k}x_{i,n}\theta_{i}+\epsilon_n =\sum_{i=1}^{k}x_{i,n}(\theta_{i}+\epsilon_{n}^{i})=\sum_{i=1}^{k}x_{i,n}Y_{i,n}.$

We will generalize the original problem \eqref{prob} by introducing notation that allows for separate forwarding decisions to be made for each feature. Define $U_{j,n}$ to be decision made for the \textrm{$j^{\mathrm{th}}$} feature of the \textrm{$n^{\mathrm{th}}$} item. The original problem \eqref{prob} can be recovered if we require that $U_{j,n}$ is identical across $j$ for each $n$.

For each feature $j$, we now introduce a new set of policies $\Pi_{j}$, which will govern the forwarding decisions $U_{j,n}$ for feature $j$, and under which these decisions can depend upon information not available in the original problem: they may depend on $\theta \cdot e_{i}$ for $\forall i\neq j$. Formally, the decision of whether or not to forward the \textrm{$j^{\mathrm{th}}$} feature of the \textrm{$n^{\mathrm{th}}$} item depends on the history $H_{n-1}^{j}=(U_{j,m},X_{j,m},U_{j,m}Y_{j,m}:m\leq n-1)$, our current $X_{j,n}$, and $\theta_{-j}=(\theta_{1},\cdots,\theta_{j-1},\theta_{j+1},\cdots,\theta_{k})$.  

Using these definitions, we may now state the computational upper bound. It bounds the value of the optimal policy for our original problem of interest \eqref{prob}, on the left-hand side, by the sum of a collection of values of single-feature problems, each of which have been given additional information.  Efficient computation of this right-hand side is discussed below, and summarized in Algorithm~\ref{algorithm1}.
\begin{theorem}
For $X_{n}$ that are bounded over all n, we have
\begin{align}
&\sup_{\pi\in\Pi}E^{\pi}\left[\sum_{n=1}^{N}U_{n}(Y_{n}-c)\right]  \nonumber \\
\leq &\sum_{j=1}^{k}\sup_{\pi^{''}\in\Pi_{j}}E^{\pi^{''}}\left[\sum_{n=1}^{N}U_{j,n}(x_{j,n}Y_{j,n}-\frac{x_{j,n}c}{\|X_{n}\|})\right], \nonumber 
\end{align}
where $\|X_{n}\|$ is the $L_1$ norm. When $\sum_{i=1}^{k}x_{i,n}=1$, then this theorem becomes:
\begin{align}
&\sup_{\pi\in\Pi}E^{\pi}\left[\sum_{n=1}^{N}U_{n}(Y_{n}-c)\right] \nonumber \\
\leq &\sum_{j=1}^{k}\sup_{\pi^{''}\in\Pi_{j}}E^{\pi^{''}}\left[\sum_{n=1}^{N}U_{j,n}(x_{j,n}Y_{j,n}-x_{j,n}c)\right]. \nonumber 
\end{align}
\label{t:bound}
\end{theorem}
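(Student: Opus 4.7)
The plan has three stages: an algebraic decomposition of the per-period reward, an elementary ``sup of a sum $\le$ sum of sups'' inequality, and a per-feature information relaxation that replaces the original policy class $\Pi$ with the enlarged classes $\Pi_j$. Because $X_n$ has nonnegative coordinates, $\|X_n\|_1 = \sum_j x_{j,n}$, so we may write $c = \sum_j (x_{j,n}/\|X_n\|)\,c$. Combined with the identity $Y_n=\sum_j x_{j,n}Y_{j,n}$ already recorded in the preamble to the theorem, this yields the pointwise identity $U_n(Y_n-c)=\sum_{j=1}^{k}U_n(x_{j,n}Y_{j,n}-x_{j,n}c/\|X_n\|)$. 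Boundedness of $X_n$ together with the geometric distribution of $N$ guarantees absolute integrability, so expectations and summations can be interchanged freely.

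Taking expectation and applying $\sup \sum \le \sum \sup$, I obtain $\sup_{\pi\in\Pi}E^\pi[\sum_n U_n(Y_n-c)]\le \sum_j \sup_{\pi\in\Pi}E^\pi[\sum_n U_n(x_{j,n}Y_{j,n}-x_{j,n}c/\|X_n\|)]$. It then remains to bound each $j$-th summand on the right by the corresponding $\Pi_j$-sup that appears in the theorem. I would do this in two substeps. First, I drop the implicit coupling $U_{j,n}=U_n$ by allowing the decision for each feature to be chosen separately; this strictly enlarges the policy class and can only increase the sup. Second, I reveal the latent $\theta_{-j}$ to the policy responsible for feature $j$. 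Both moves are bona fide information relaxations.

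The main obstacle is the final step of identifying this doubly-relaxed class with $\Pi_j$. The natural information set after the relaxation is $\{H_{n-1}, X_n, \theta_{-j}\}$, which is strictly larger than $\{H_{n-1}^{j}, X_{j,n}, \theta_{-j}\}$ used to define $\Pi_j$, so one has to argue that passing to $\Pi_j$'s smaller filtration is free. The key observation is that in the decoupled $j$-th subproblem the policy is free to choose its own forwarding schedule $U_{j,m}$, which gives it direct observations $Y_{j,m}=\theta_j+\epsilon_m^{j}$ of noise variance $\lambda^2/x_{j,m}^2$; after subtracting the now-known $\sum_{i\ne j}x_{i,m}\theta_i$ from $U_m Y_m$, the extra history in $\{H_{n-1},X_n\}$ provides only noisier single-coordinate readouts of $\theta_j$ (variance $I(X_m)\lambda^2/x_{j,m}^2$), so $\Pi_j$'s observations Blackwell-dominate what the larger filtration contributes about $\theta_j$. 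Meanwhile $\|X_n\|$ enters only through the cost term, and the missing $X_{-j,n}$ can be handled by conditioning on $X_{j,n}$ and integrating against the known distribution of $X_n$.

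The second display then follows from the first by pure specialization: if $\sum_i x_{i,n}=1$ then $\|X_n\|=1$, so $x_{j,n}c/\|X_n\|=x_{j,n}c$ and the general bound collapses to the stated form.
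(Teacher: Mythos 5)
Your proof follows the same route as the paper's: the paper likewise writes $c=\sum_j x_{j,n}c/\|X_n\|$, expands $Y_n=\sum_j x_{j,n}Y_{j,n}$, passes through an intermediate class $\Pi'$ in which the per-feature decisions $U_{j,n}$ are decoupled, applies $\sup\sum\le\sum\sup$, and finishes by relaxing to $\Pi_j$. The one substantive difference is that you scrutinize the final step, whereas the paper simply asserts $(\ref{upper3})\le\sum_j\sup_{\Pi_j}(\cdot)$ ``based on the definition of $\Pi_j$''; your observation that the filtration $\{H_{n-1},X_n,\theta_{-j}\}$ of the decoupled problem is not literally contained in $\{H^j_{n-1},X_{j,n},\theta_{-j}\}$ is correct, and your Blackwell-type argument (given $\theta_{-j}$, the cross-feature readout $Y_m-\sum_{i\ne j}x_{i,m}\theta_i$ is a strictly noisier observation of $\theta_j$ than $Y_{j,m}$) is the right way to justify what the paper leaves implicit. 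The one soft spot is your treatment of the cost term: when $\|X_n\|$ is nonconstant, a decoupled policy that sees the full $X_n$ can condition its feature-$j$ decision on $\|X_n\|$, and ``integrating against the distribution of $X_n$'' goes the wrong way for an upper bound (a policy with the finer information can only do better, so $\sup_{\Pi_j}$ could in principle fall below the decoupled sup). This issue vanishes in the normalized case $\sum_i x_{i,n}=1$, which is the only case the paper computes with, but in the general statement it is a gap that your sketch, like the paper's proof, does not fully close; the clean fix is to let $\Pi_j$ policies also observe the current full vector $X_n$ (which does not change the one-dimensional dynamic program, since future $X$'s are independent of the current one).
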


\begin{proof}
Since $\|X_{n}\| = x_{1,n}+\cdots+x_{k,n}$, we know
\begin{align}
&\sup_{\pi\in\Pi}E^{\pi}\left[\sum_{n=1}^{N}U_{n}(Y_{n}-c)\right] \nonumber \\
=&\sup_{\pi\in\Pi}E^{\pi}\left[\sum_{n=1}^{N}U_{n}(x_{1,n}Y_{1,n}+\cdots+x_{k,n}Y_{k,n}-c)\right] \nonumber \\
=&\sup_{\pi\in\Pi}E^{\pi}\left[\sum_{n=1}^{N}\sum_{j=1}^{k}U_{n}(x_{j,n}Y_{j,n}-x_{j,n}\frac{c}{\|X_{n}\|})\right]. \label{upper1}
\end{align}
Now we introduce two new policy sets $\Pi_{0}^{'}$ and $\Pi^{'}$, which allow different features can make their own decisions $U_{j,n}$ for the \textrm{$n^{\mathrm{th}}$} item. Further, $\Pi^{'}_{0}$ has an additional restriction that $U_{1,n}=\cdots=U_{j,n}$. Based on the definition, we have
\begin{align}
(\ref{upper1})=&\sup_{\pi^{'}\in\Pi^{'}_{0}}E^{\pi^{'}}\left[\sum_{n=1}^{N}\sum_{j=1}^{k}U_{j,n}(x_{j,n}Y_{j,n}-x_{j,n}\frac{c}{\|X_{n}\|})\right] \nonumber \\
\leq & \sup_{\pi^{'}\in\Pi^{'}}E^{\pi^{'}}\left[\sum_{n=1}^{N}\sum_{j=1}^{k}U_{j,n}(x_{j,n}Y_{j,n}-x_{j,n}\frac{c}{\|X_n\|})\right]. \label{upper2}
\end{align}
Since the supremum of a summation is less or equal to the summation of a supremum, we have
\begin{align}
(\ref{upper2})\leq \sum_{j=1}^{k}\sup_{\pi^{'}\in\Pi^{'}}E^{\pi^{'}}\left[\sum_{n=1}^{N}U_{j,n}(x_{j,n}Y_{j,n}-x_{j,n}\frac{c}{\|X_{n}\|})\right]. \label{upper3}
\end{align}
Then based on the definition of our policy set $\Pi_{j}$, for $j=1,2,\cdots,k$, we know
\begin{align}
(\ref{upper3})\leq \sum_{j=1}^{k}\sup_{\pi^{''}\in\Pi_{j}}E^{\pi^{''}}\left[\sum_{n=1}^{N}U_{j,n}(x_{j,n}Y_{j,n}-x_{j,n}\frac{c}{\|X_n\|})\right], \nonumber 
\end{align}
which concludes the proof of the theorem.

\end{proof}

We emphasize that this computational upper bound holds true in general, even when the different components of $X_{n}$ are correlated. Numerical experiments in Section~\ref{sec:numerical} suggest that the optimality gap between this upper bound and the best heuristic policy is typically small. 

For simplicity, in this paper we focus on the special case where $\sum_{i=1}^{k}x_{i,n}=1$. We now discuss computation of the upper bound in Theorem~\ref{t:bound}. To compute this quantity, we must solve these $k$ subproblems:
\begin{align}
&\sup_{\pi\in\Pi_{j}}E^{\pi}\left[\sum_{n=1}^{N}U_{j,n}(x_{j,n}Y_{j,n}-x_{j,n}c)\right],  j=1,2,\cdots,k, 
\end{align}
where $Y_{j,n}|\theta_{j}\sim N(\theta_{j},\frac{\lambda^{2}}{x_{j,n}^{2}})$ and $\theta_{j}\sim N(\mu_{j,n}, \sigma_{j,n}^{2})$. Here $\theta_{j}\sim N(\mu_{j,n},\sigma_{j,n}^{2})$ represents our belief of $\theta_{j}$ after the first n items. 

Therefore for each subproblem, after the arrival of the \textrm{$n^{\mathrm{th}}$} item, we can update our parameters as the following:
 \[ \mu_{j,n} = \left\{ \begin{array}{ll}
         \frac{\lambda^{2}\beta_{j,n-1}\mu_{j,n-1}+Y_{j,n-1}x_{j,n-1}^{2}}{\lambda^{2}\beta_{j,n-1}+x_{j,n-1}^{2}} & \mbox{if $U_{j,n-1}=1$};\\
        \mu_{j,n-1} & \mbox{if $U_{j,n-1}=0$}.\end{array} \right.  \]
The precision of our beliefs (which is the inverse of the prior/posterior variance with initial value $\beta_{j,0}=\frac{1}{\sigma_{j,0}^{2}}$) is updated as follows:
 \[ \beta_{j,n} = \left\{ \begin{array}{ll}
         \beta_{j,n-1}+\frac{x_{j,n-1}^{2}}{\lambda^{2}} & \mbox{if $U_{j,n-1}=1$};\\
        \beta_{j,n-1} & \mbox{if $U_{j,n-1}=0$}.\end{array} \right. \]

The \textrm{$j^{\mathrm{th}}$} single-feature subproblem can be solved using dynamic programming with a three-dimensional state space $(\mu_{j,n},\sigma_{j,n},x_{j,n})$, where $\mu_{j,n}$ and $\sigma_{j,n}$ are the mean and variance of our current belief about $\theta_{j}$ and $x_{j,n}$ is the current item's \textrm{$j^{\mathrm{th}}$} feature. Initially, $\mu_{j,0}$ and $\sigma_{j,0}$ are given by the conditional distribution of $\theta_{j}$ given $\theta_{-j}$ and the prior distribution $\theta\sim N(\mu,\Sigma)$. Upon each item's arrival, we move to another state based on the updating formula described above. Define $Q_{j}(\mu,\sigma,x,0)$ and $Q_{j}(\mu,\sigma,x,1)$ be the total reward to go if you decided to discard the item and forward the item respectively, 
\begin{align}
Q_{j}(\mu,\sigma,x,U)=&\sup_{\pi^{''}\in \Pi_{j}}E^{\pi^{''}
}[\sum_{n=1}^{\infty}\gamma^{n-1}U_{j,x}(x_{j,n}Y_{j,n}-x_{j,n}c) \nonumber \\
&|\theta_{j}\sim N(\mu,\sigma^{2}),x_{j,1}=x, U_{j,1}=U]. \nonumber 
\end{align}

Then the Bellman equation for this problem is:
\begin{align}
    V_{j}(\mu,\sigma,x) = \max_{U=0,1}Q_{j}(\mu,\sigma,x,U). \label{bellman}
\end{align}

This calculation is summarized as Algorithm~\ref{algorithm1}.

\begin{algorithm}
\caption{Calculation of the \textrm{$j^{\mathrm{th}}$} subproblem}\label{algorithm1}
\begin{algorithmic}
\STATE Solve the dynamic program using backward induction  (discretizing and truncating), with state space $(\mu_{j,n},\sigma_{j,n},x_{j,n})\in \mathbb{R}\times\mathbb{R}^{+}\times[0,1]$, 
infinite horizon and value function $V_{j}(\mu,\sigma,x)$. 

\FOR {$i=1 ; i<M; i++$}  
{
\STATE Generate $\theta\sim N(\mu,\Sigma)$; \STATE Calculate the conditional distribution of $\theta_{j}\sim N(\mu_{j,0},\sigma_{j,0})$, given $\theta\sim N(\mu,\Sigma)$ and $\theta_{-j}$. \STATE Generate $x_{j,0}$ from the distribution of $X_{n}$. 
\STATE Find the optimal value of state $(\mu_{j,0},\sigma_{j,0},x_{j,0})$ and denote it as $V_{i}$.
}\ENDFOR

\STATE Calculate $\bar{V}=\frac{1}{M}\sum_{i=1}^{M}V_{i}$ and use \eqref{Prob} to get the optimal value for the $j^{\mathrm{th}}$ subproblem, where M is the number of simulation.
\end{algorithmic}
\end{algorithm}

We may improve our upper bound by taking its minimum with a hindsight upper bound, derived in the following way. 
We first consider a larger class of policies that may additionally base their decisions on full knowledge of $\theta$.  An optimal policy among this larger class of policies forwards the $n^{\mathrm{th}}$ item to the user only if $\theta\cdot X_{n}>c$, and the expected total reward of this optimal policy is 
\begin{equation}
E\left[\sum_{n=1}^N (\theta\cdot X_{n}-c)^{+}\right]
= \frac\gamma{1-\gamma} E\left[(\theta\cdot X_1 - c)^+\right].
\label{eq:hindsight}
\end{equation}
Since \eqref{eq:hindsight} is the supremum of the same objective as \eqref{Prob}, but over a larger set of policies, it forms an upper bound.
This style of analysis was also applied in \cite{ssesp}.
In Section~\ref{sec:numerical}, we use the minimum of the computational upper bound in Theorem~\ref{t:bound} and the hindsight upper bound \eqref{eq:hindsight} as our theoretical upper bound. 

\subsection{The \acronym\ policy}
\label{sec:policy1}
The analysis in Section~\ref{sec:bound} provides a way to bound the performance of any policy, and is derived by decomposing the original multi-feature problem into many single-feature subproblems.  
In this section, we build on this same idea to develop an implementable policy, called \acronym, and in Section~\ref{sec:policy2} we build on this idea further to create a second implementable policy, called \MUCB.

In \acronym, as each item arrives, we consider the decomposition from Section~\ref{sec:bound} taking the incoming feature vector $X_n$ and choosing a basis for which 
$X_n$ is a unit vector in the basis. 
This basis may change with each $n$.

We then consider the decomposed problem studied in Section~\ref{sec:bound}, in which we may make separate forwarding decisions for each direction in the basis, and compute the value of exploration corresponding to $X_n$ in this decomposed problem.  

To compute this value of exploration, we first compute the distribution of the magnitude $x$ of the projection of future feature vector $X$ along direction $X_n$,  $x=\frac{X_{n}\cdot X}{X_n\cdot X_n}$, by using the distribution of future feature vectors $f(X)$.  Denote this distribution by $G(x|X_n)$. We then solve the corresponding single-feature subproblem using \eqref{bellman} as described in Section~\ref{sec:bound}.

From this solution, we derive Q factors, 
$Q(\mu_{1,0},\sigma_{1,0},x_0,0)$ and 
$Q(\mu_{1,0},\sigma_{1,0},x_0,1)$
corresponding to the value of discarding and forwarding the current item in the single feature subproblem, given that the current feature vector has magnitude $x_0=1$ and given that our current prior mean and variance for the subproblem are
\begin{align}
    \mu_{1,0} = X_{n}\cdot \mu_{n},
    \sigma_{j,0}^{2} = X_{n}\Sigma_{n}X_{n}^{T}. \nonumber 
\end{align}

We then define the ``exploration benefit'' $E(\mu_{1,0},\sigma_{1,0})$ from forwarding the current item as the overall benefit of forwarding, minus the myopic benefit of forwarding $\mu_{1,0} - c$ and the benefit of discarding:
\begin{align*}
       E(\mu_{1,0},\sigma_{1,0}) = & Q(\mu_{1,0},\sigma_{1,0},1,1)- \\
    &Q(\mu_{1,0},\sigma_{1,0},1,0)-\mu_{1,0} + c.
\end{align*}

In \acronym, we add a scalar tuning parameter $\alpha$, mirroring the tuning parameter used in UCB, to scale up or down the exploration benefit.  The default value for $\alpha$ is $\alpha=1$.  Then, returning to the original multi-dimensional problem, we  consider the net benefit of forwarding to be the myopic benefit $X_n \cdot \mu_n - c$ plus the exploration benefit $\alpha E(\mu_{1,0},\sigma_{1,0})$, and forward when this is strictly positive.
This is summarized in Algorithm~\ref{algorithm2}.

\begin{algorithm}
\caption{The \acronym\ algorithm}\label{algorithm2}
\begin{algorithmic}
\FOR{$n=1,2,\cdots$}
{
    \STATE{
     Denote $\mu_{1,0} = X_{n}\cdot \mu_{n}$ and  $\sigma_{1,0}^{2} = X_{n}\Sigma_{n}X_{n}^{T}$; \\
     Calculate $Q(\mu_{1,0},\sigma_{1,0},1,U)$ for $U=0,1$ given that $x\sim G(x|X_n)$; \\
     Denote $E(\mu_{1,0},\sigma_{1,0})=Q(\mu_{1,0},\sigma_{1,0},1,1)-Q(\mu_{1,0},\sigma_{1,0},1,0)-\mu_{1,0}+c$; \\
     \IF{$\mu_{1,0}+\alpha\cdot E(\mu_{1,0},\sigma_{1,0})>c$}
     \STATE{Forward the item}
     \ELSE 
     \STATE{Discard the item}
     \ENDIF
     }
     
}\ENDFOR

\end{algorithmic}
\end{algorithm}


\subsection{The \MUCB\ algorithm}
\label{sec:policy2}

In this section, we develop a second heuristic, \NMUCB\ (\MUCB), which builds on the ideas underlying \acronym.

In \acronym, we considered a single-feature subproblem in which the magnitude $x$ of the projection of future feature vectors is given by $G(x|X_n)$ and in which the prior mean and prior variance were given by $X_n \cdot \mu_n$ and $X_n \Sigma_n X_n^T$ respectively.  We then quantified the value of exploration by solving the single-feature subproblem using stochastic dynamic programming. In this single-feature subproblem, we observe that when future feature vectors are more closely aligned with $X_n$, so that samples from $G(x|X_n)$ are large, we are more willing to explore.

In our second heuristic \MUCB, we take a similar approach, but quantify the value of exploration using an approach adopted from the literature on upper confidence bound policies, which quantifies the value of exploration in terms of some scalar multiple $\alpha$ of the standard deviation of the value of an action, obtained from calculating an upper confidence bound and subtracting the center of the confidence region.
In \MUCB, we quantify the value of information similarly, but add an additional scaling factor to include the fact that those $X_n$ whose $G(x|X_n)$ have larger moments should induce more exploration.


To accomplish this, we let $M(X_n)$ be the mean of the distribution $G(x|X_n)$.  This ``mean of the projection'' is
\begin{align}
    M(X_n) = \int_{X}\frac{X_n\cdot X}{X_{n} \cdot X_n}f(X)dX. \nonumber 
\end{align}

We summarize the \MUCB\ algorithm in Algorithm~\ref{algorithm3}.

\begin{algorithm}
\caption{The \MUCB\ algorithm}\label{algorithm3}
\begin{algorithmic}
\FOR{$n=1,2,\cdots$}
{
\IF{$X_n \cdot \mu_n+\alpha \cdot M(X_n)\cdot \sqrt{X_n \Sigma_n X_n}>c$}

\STATE {Forward the item}
\ELSE 
\STATE {Discard the item}
\ENDIF

}\ENDFOR

\end{algorithmic}
\end{algorithm}


\section{Numerical Experiments}
\label{sec:numerical}

In this section, we compare \acronym\ and \MUCB\ with three different benchmark algorithms and the computational upper bound from Section~\ref{sec:bound} using both real and simulated data.
The benchmark algorithms are:
\begin{itemize}
\item Pure Exploitation: Forward the item if $X_{n}\cdot \mu_{n}\geq c$.
\item Upper Confidence Bound (UCB): Forward the item if $X_{n}\cdot \mu_{n}+\alpha\sqrt{X_{n}\Sigma_{n}X_{n}^{T}}\geq c$. 
\item Linear Thompson Sampling (LTS): For item $X_{n}$, generate $\theta\sim N(\mu_{n},\Sigma_{n})$. Forward the item if $\theta\cdot X_{n}>c$.
\end{itemize}

For \acronym, \MUCB\ and UCB, there is a tuning parameter $\alpha$. In our simulation experiments we run these policies with 10 different values of $\alpha$ ranging from $0.1$ to $10$ on a log scale, and display the one with the best performance (which requires simulating performance for different values of $\alpha$ in a Monte Carlo simulation as a pre-processing step) in each instance.


We evaluate our upper bound and proposed policy on real and simulated data, and find our upper bound is tight enough to be useful (the best policy evaluated is often within 60\% of the upper bound and never below 30\% of the upper bound).

\begin{figure*}[t!]
    \centering
    \begin{subfigure}[t]{0.5\textwidth}
        \centering
        \includegraphics[width=1\textwidth]{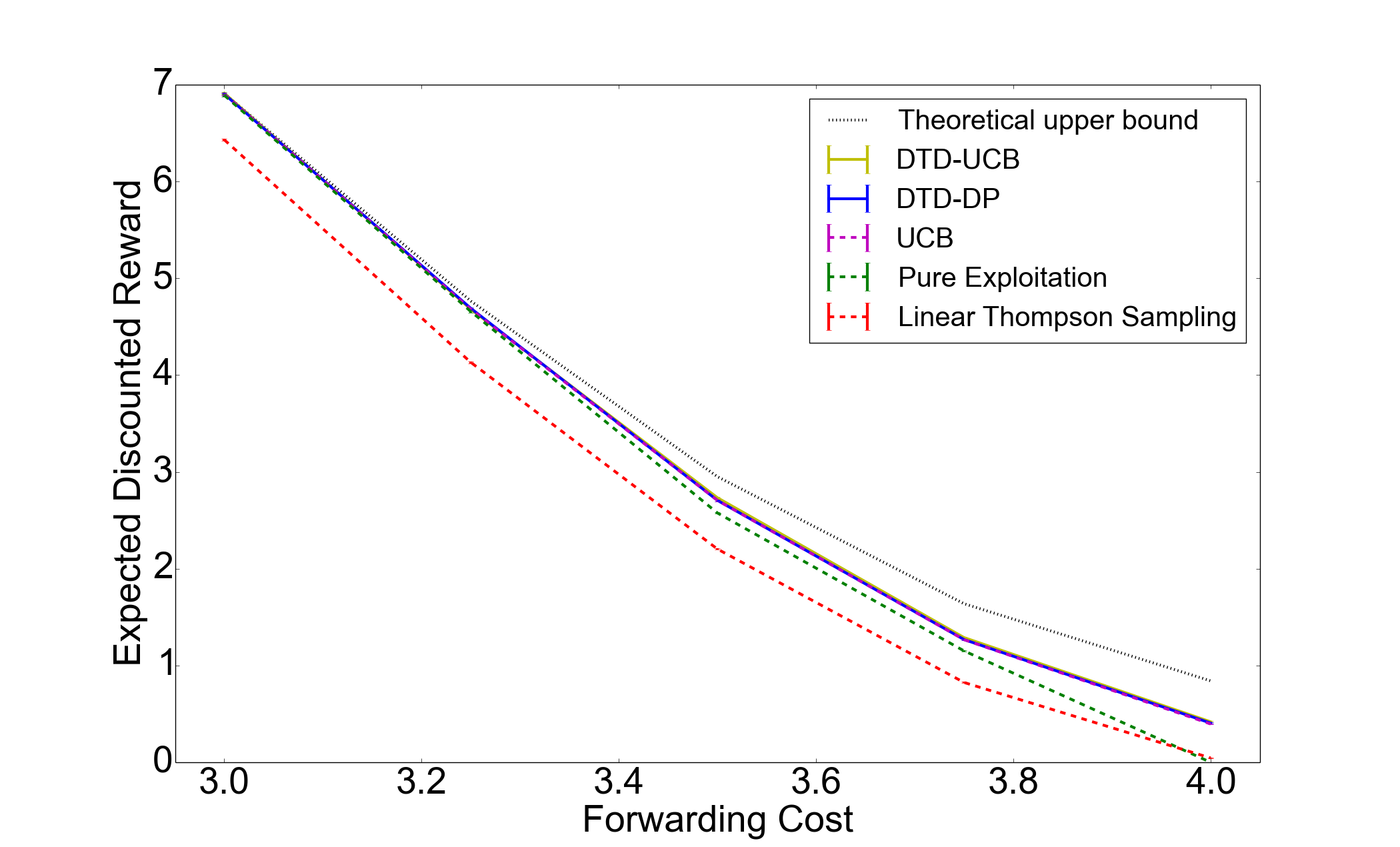}
        \caption{Comparison of Different Policies Using Yelp Academic Data}   
        \label{fig:TR_yelp} 
        \end{subfigure}%
    ~ 
    \begin{subfigure}[t]{0.5\textwidth}
        \centering
        \includegraphics[width=1\textwidth]{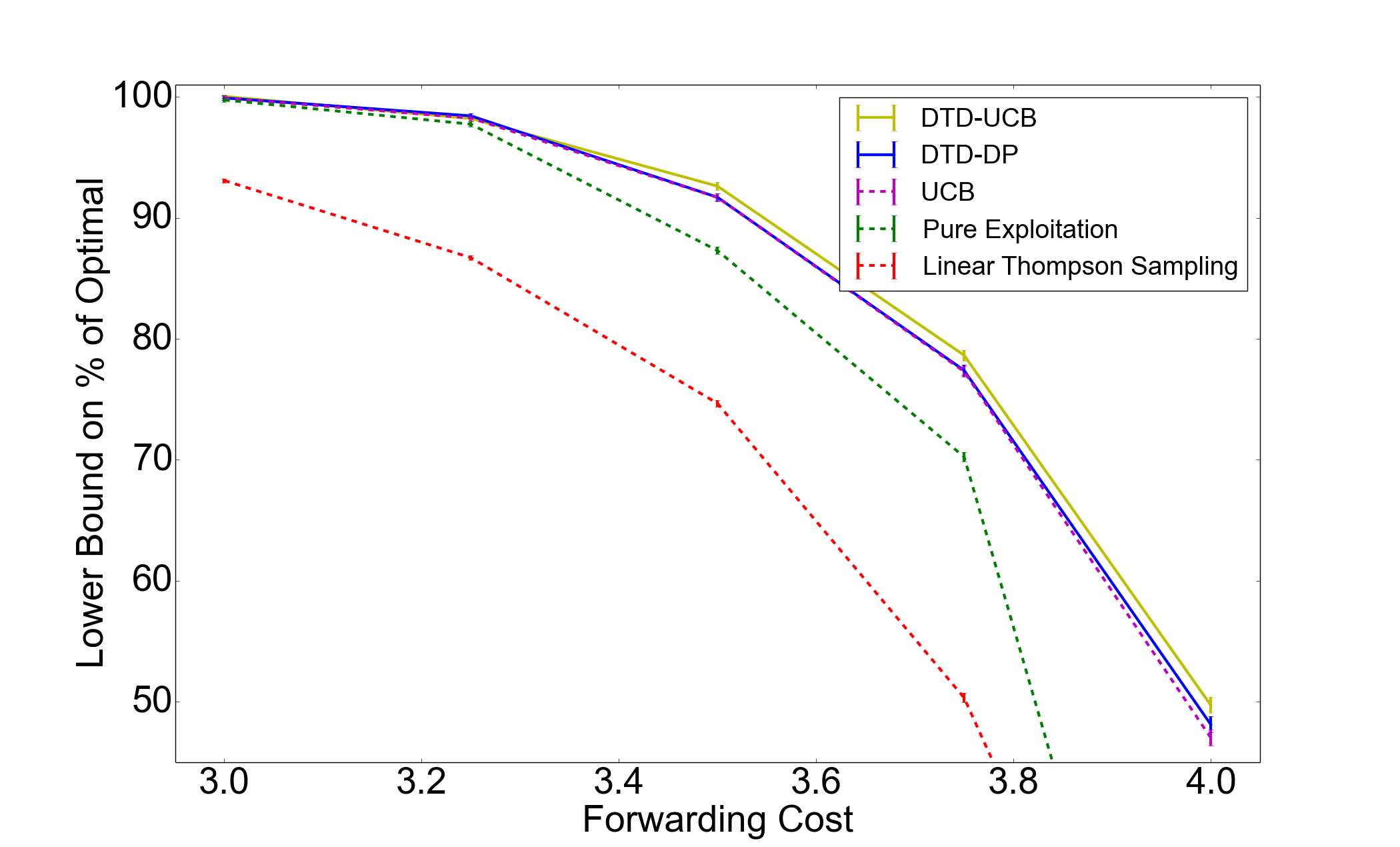}
        \caption{Optimality Gap of Different Policies Using Yelp Academic Data}
        \label{fig:OG_yelp}
    \end{subfigure}
    \caption{The performance of \acronym, \MUCB\ and three benchmark algorithms relative to the computational upper bound.
This plot compares performance on the Yelp academic dataset (Section~\ref{sec:realdata1}), and shows that \MUCB\ outperforms all other heuristic policies. \acronym\ performs comparably (and nearly identical to) UCB, and outperforms pure exploitation and LTS. \MUCB\ performs close to the computational upper bound, showing their performance is close to optimal. \label{fig:result1}}
\end{figure*}

\begin{figure*}[t!]
    \centering
    \begin{subfigure}[t]{0.5\textwidth}
        \centering
        \includegraphics[width=1\textwidth]{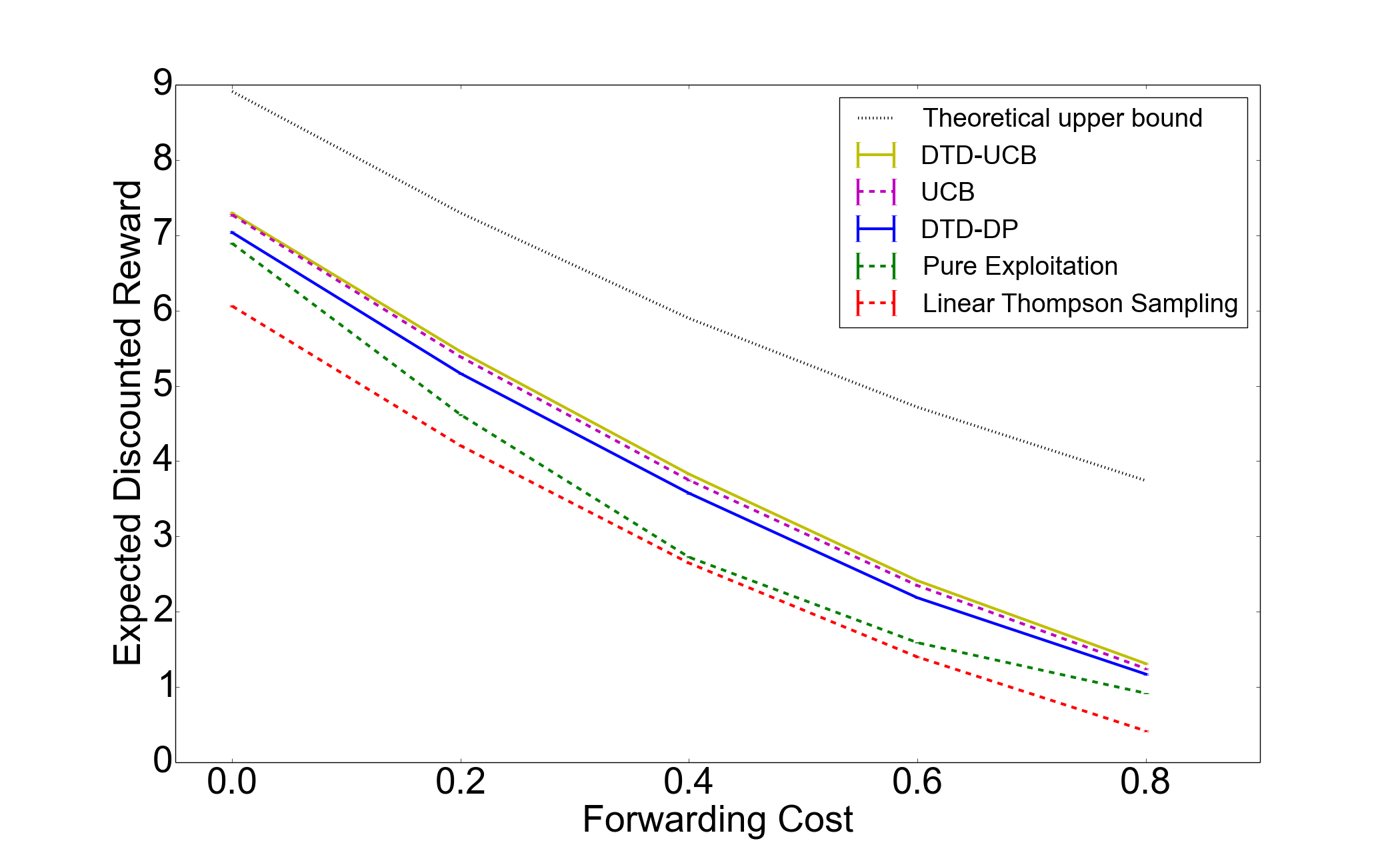}
        \caption{Comparison of Different Policies Using arXiv.org dataset}   
        \label{fig:TR_arxiv} 
        \end{subfigure}%
    ~ 
    \begin{subfigure}[t]{0.5\textwidth}
        \centering
        \includegraphics[width=1\textwidth]{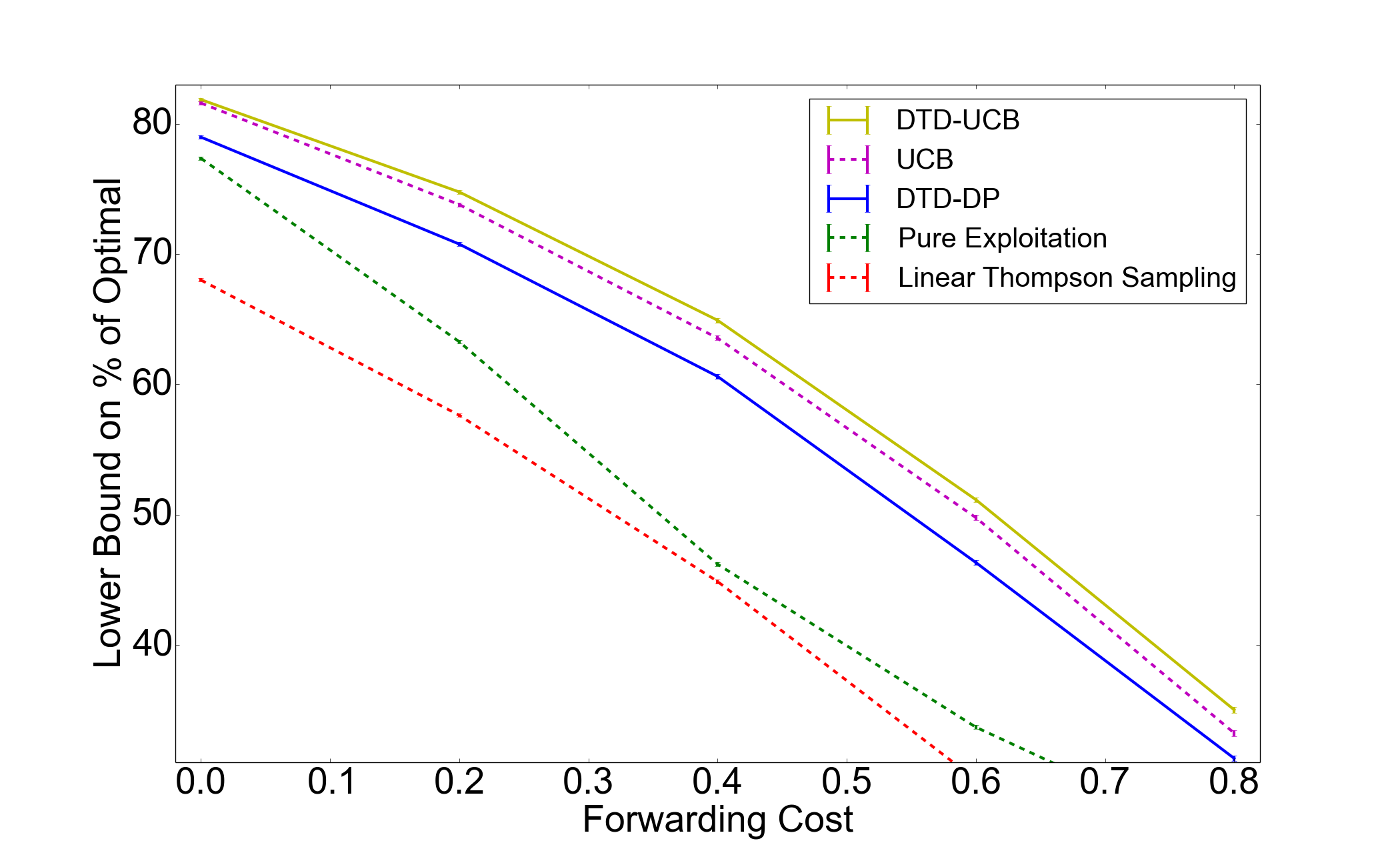}
        \caption{Optimality Gap of Different Policies Using arXiv.org dataset}
        \label{fig:OG_arxiv}
    \end{subfigure}
    \caption{The performance of \acronym, \MUCB\ and three benchmark algorithms relative to the computational instance-specific upper bound.
This plot compares performance on the 2014 arXiv.org Condensed Matter dataset (Section~\ref{sec:realdata2}), and shows that \MUCB\ outperforms all other heuristic policies. \label{fig:result2}}
\end{figure*}

\begin{figure*}[t!]
    \centering
    \begin{subfigure}[t]{0.5\textwidth}
        \centering
        \includegraphics[width=1\textwidth]{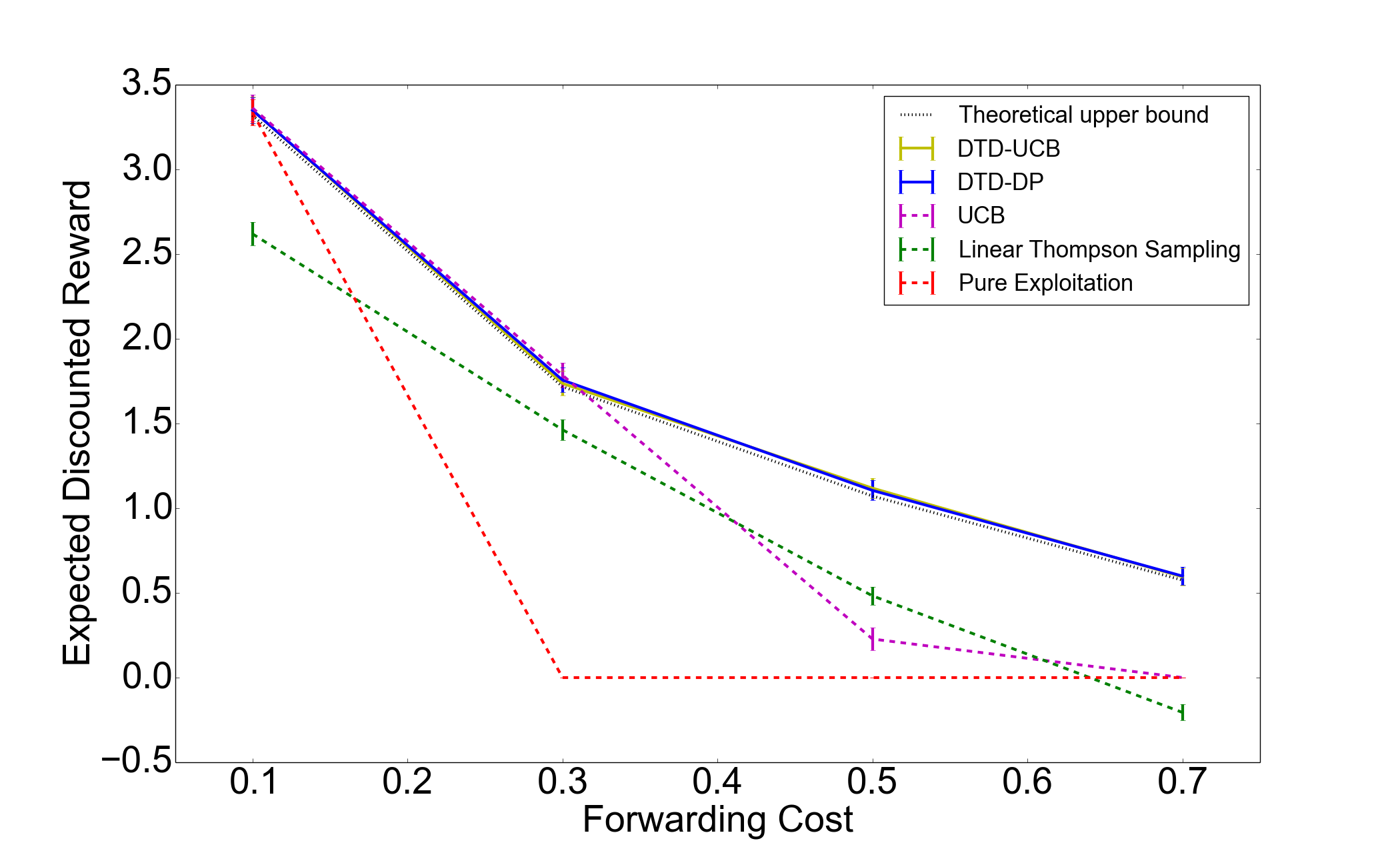}
        \caption{Comparison of Different Policies Using Simulated Data}   
        \label{fig:TR_ideal} 
        \end{subfigure}%
    ~ 
    \begin{subfigure}[t]{0.5\textwidth}
        \centering
        \includegraphics[width=1\textwidth]{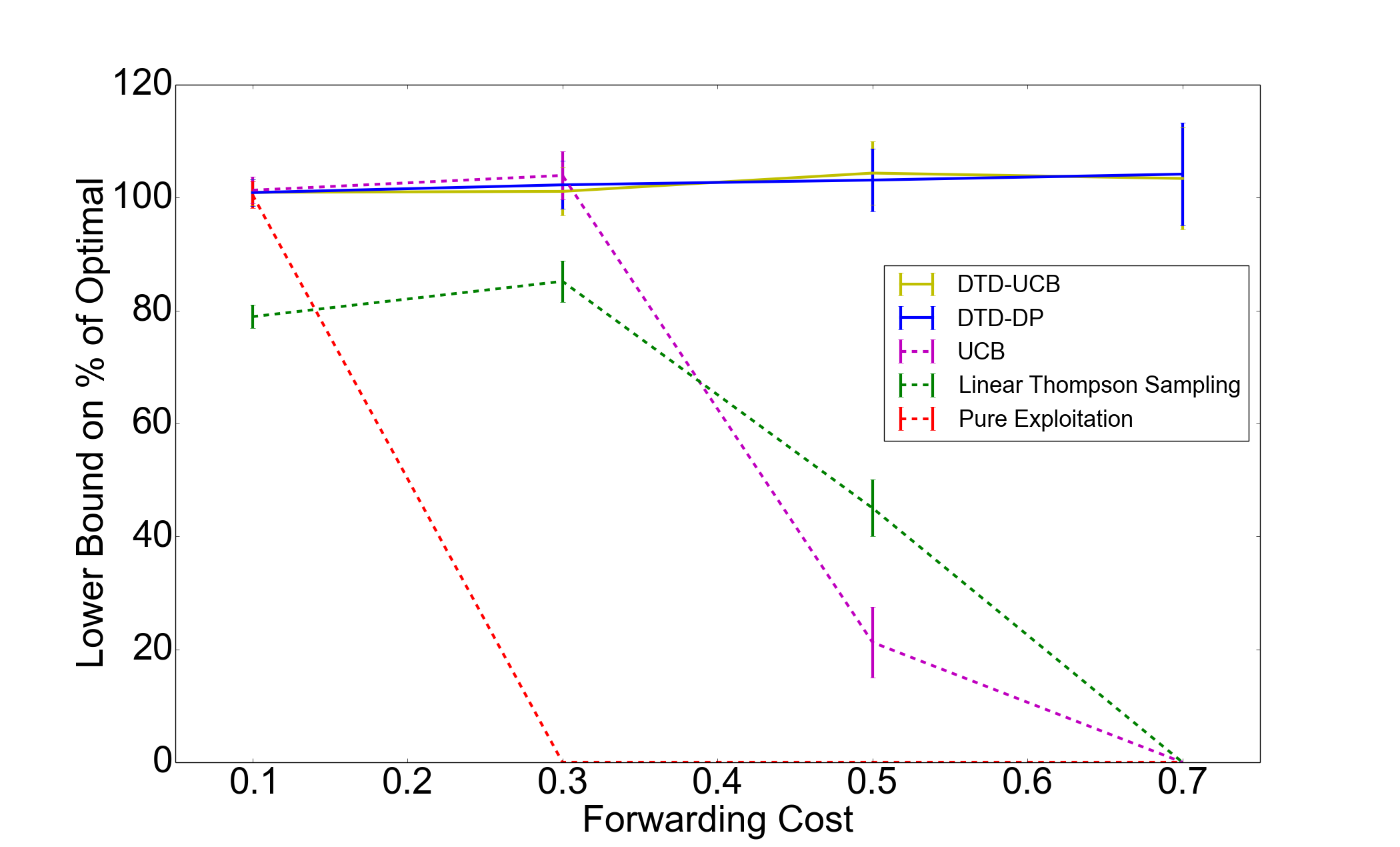}
        \caption{Optimality Gap of Different Policies Using Simulated Data}
        \label{fig:OG_ideal}
    \end{subfigure}
    \caption{
The performance of \acronym, \MUCB\ and three benchmark algorithms relative to the computational instance-specific upper bound using simulated data.
This plot compares performance on simulated data (Section~\ref{sec:simudata}), and shows that \acronym\ and \MUCB\ outperform all the other algorithms and coincides with the theoretical upper bound, showing it is indistinguishable from optimal in this case.}
\label{fig:result3}
\end{figure*}

\subsection{Yelp academic data}
\label{sec:realdata1}
In this section, we compare \acronym\ and \MUCB\ against benchmarks using the Yelp academic dataset \cite{yelp}.

Our items are businesses, and are described as belonging to one or more of the following six categories: Restaurants, Shopping, Food, Beauty and Spas, Health and Medical and Nightlife.
The $j^{\mathrm{th}}$ business object is then described by a 6-dimensional feature vector $X_{j}=(x_{1,j},x_{2,j},\cdots,x_{6,j})$ with the $i^{\mathrm{th}}$ element $x_{i,j}=1$ if the business belongs to category i, and $x_{i,j}=0$ otherwise. 
Then we normalize $X_{j}$ such that its L1 norm is 1.

We calculate the prior distribution over new customers' preferences using historical users' reviews. For each historical user, we use linear regression to regress his reviews' ratings on the feature vectors of the business objects that he reviewed. We use the estimated linear regression coefficients as his/her true user preference vector. Then we calculate the empirical distribution for all historical users, and set the prior on new users' preference vectors to be multivariate normal with mean vector and covariance matrix equal to the sample mean and sample covariance of the historical users.

In Figure~\ref{fig:TR_yelp}, evaluation is done by taking a collection of real historical users, and for each estimating his true preference vector $\theta$ using linear regression on historical data. Evaluation is then performed for each algorithm and user by simulating feedback from the user's held out $\theta$ on items forwarded by the algorithm, and an algorithm's average performance is calculated by averaging across users.
We must simulate user feedback given $\theta$ because we do not have historical relevance feedback from all users for all items, and algorithms may present items that have not been rated. We plot the $95\%$ confidence interval of cumulative reward over 100 items forwarded to the user with discount factor $\lambda = 0.9$.

In Figure~\ref{fig:OG_yelp}, we calculate the optimality gap between each heuristic algorithm and our computational upper bound. A smaller gap suggests the corresponding policy performs better in this problem instance.


The plot in Figure~\ref{fig:result1} summarizes the results. In this problem instance, \MUCB\ outperforms \acronym, UCB, pure exploitation and LTS, with \acronym\ and UCB performing almost identically.  Moreover, the optimality gap is relatively small, which shows that \MUCB\ performs close to optimal.  



\subsection{arXiv.org Condensed Matter Dataset}
\label{sec:realdata2}
In this section, we compare \acronym\ and \MUCB\ with benchmarks using readership data from articles submitted in 2014 to the arXiv condensed matter category. We represent each paper submitted in 2014 by a 10 dimensional vector using Latent Dirichlet allocation (LDA) \cite{lda}. For each user, the rating for a paper is 1 if he/she clicks and otherwise the rating is 0. We then calculate the user's preference vector by linear regression. Similar to Section~\ref{sec:realdata1}, we use the sample mean and sample variance of users' preference vectors as our prior distribution parameters.

In our simulation, we use true users' preference vectors calculated using linear regression, as we did in Section~\ref{sec:realdata1}. For each user, we randomly pick 100 papers and make the forwarding decisions using different policies. We evaluate the cumulative reward for these 100 papers with discount factor $\lambda = 0.9$.

The result is summarized in Figure~\ref{fig:result2}. The best of our heuristic policies in this example, \MUCB, outperforms all other heuristic policies. In this specific example, \acronym\ does not perform as well as UCB but it outperforms pure exploitation and LTS. 

\subsection{Simulated Data}
\label{sec:simudata}
In this section, we compare the performance of \acronym\ and \MUCB\ with three benchmark algorithms, as well as our computational upper bound on simulated data.
This simulated data is chosen to give insight into situations where UCB can underperform, and where the structure of a policy like \acronym\ and \MUCB\ are needed to provide near-optimal performance.
We emphasize that it is chosen to provide insight, and not to show performance on a typical real problem instance --- we refer this comparison to Section~\ref{sec:realdata1} and Section~\ref{sec:realdata2}.

Each item is described by a 100-dimensional feature vector $X_{n}$ with the following distribution: $P(X_{n}=e_{1})=\frac{100}{199}$, $P(X_{n}=e_{i})=\frac{1}{199}$ for $i=2,\cdots,100$. Here, $e_x$ is the unit vector in the $x$th dimension.  The initial belief on the user's preference for each feature is $N(0.3,1.0)$ with independence across features. We set $\gamma=0.9$ and $\lambda=0.1$.  In estimating the infinite-horizon discounted sum \eqref{Prob}, we truncate after $n=100$.

The results, summarized in Figure~\ref{fig:result3}, show that \acronym\ and \MUCB\ outperform UCB, pure exploitation and LTS. In most cases, UCB performs very well with a properly chosen $\alpha$. Moreover, \acronym\ and \MUCB\ outperform UCB for several values of the forwarding cost, and nearly coincides with the theoretical upper bound for all values of the forwarding cost, which shows that it is indistinguishable from optimal in this problem instance.

LTS does not perform well in this example because it performs poorly at the initial stages and the (discounted) reward in the later stages cannot make up for the loss at the early stages. As \cite{lovis} and \cite{lovps} pointed out, LTS generally underperforms tuned UCB.

UCB underperforms \acronym\ and \MUCB\ in this example because it cannot account for the frequency with which a feature appears, and thus cannot adjust its level of exploration (encoded as the choice of $\alpha$) to explore more those features that tend to reoccur frequently, and explore less those features that are unlikely to appear again. In contrast, both \acronym\ and \MUCB\ can adjust its level of exploration, and will explore more those features that will reoccur.  

\section{Conclusion}
We studied the Bayesian linear information filtering problem, providing an instance-specific computational upper bound and a pair of new {\it Decompose-Then-Decide} heuristic policies, \acronym\ and \MUCB. Numerical experiments show that the best of these two policies is typically close to the computational upper bound and outperforms several benchmarks on real and simulated data.

\section*{Acknowledgment}
The authors were partially supported by NSF CAREER CMMI-1254298, NSF CMMI-1536895, NSF IIS-1247696, NSF DMR-1120296, AFOSR FA9550-12-1-0200, AFOSR FA9550-15-1-0038,  and AFOSR FA9550-16-1-0046.

\end{document}